%% file: arxiv_main.tex
\documentclass[journal]{IEEEtran}

\usepackage{amsmath,amssymb,amsfonts}
\usepackage{amsthm}
\usepackage{graphicx}
\usepackage{booktabs}
\usepackage{bm}
\usepackage{cite}
\usepackage[colorlinks=true,linkcolor=blue,citecolor=blue,urlcolor=blue]{hyperref}

\newtheorem{theorem}{Theorem}
\newtheorem{proposition}{Proposition}

\newcommand{\Mbar}{\bar{M}^{-1}}
\newcommand{\Larm}{\Lambda_{\text{arm}}}
\newcommand{\Fmpc}{F_{\text{mpc}}}

\newcommand{\dhat}{\hat{d}}
\newcommand{\Naug}{\bar{N}_{12}}

\begin{document}

\title{Contact-Consistent Interaction Dynamics Normalization for
       Predictive Physical Human--Robot Interaction}

\author{Yongyan~Cao%
  \thanks{Y.~Cao is with Voryx Robotics, San Jose, CA 95136, USA.
          E-mail: \texttt{yongyancao@gmail.com}}}

\maketitle

\input{body}

\end{document}

%% file: body.tex
\begin{abstract}
Safe physical human--robot interaction on floating-base robots requires
interaction regulation under changing contact constraints. We develop a
contact-consistent normalization in which the residual end-effector channel
is represented as a linear double integrator in acceleration coordinates.
Both discrete prediction matrices are independent of configuration and
support mode; posture and contact enter only through task-inertia force
recovery and constraints. The controller combines a constant-Hessian
receding-horizon QP, an acceleration-disturbance observer, and a
priority-consistent realization. Classical operational-space impedance is
shown to be the unconstrained infinite-horizon limit. MuJoCo experiments on
a 17-DOF biped and a Menagerie-derived Unitree G1 model evaluate sustained
forces, transmitted shocks, and scheduled contact-model changes. The observer
sustains near-offset-free tracking across a genuine contact-set switch and a
scheduled support-mode transition, while disturbance estimation---not contact
consistency alone---is the dominant source of fixed-stance accuracy; covariance
inflation gives only scenario-dependent transient benefit. Dynamic walking
and hardware validation remain outside the present evidence.
\end{abstract}

\begin{IEEEkeywords}
Interaction dynamics, whole-body control, model predictive control,
impedance control, floating-base robots, physical human--robot interaction,
contact-consistent dynamics, legged manipulation.
\end{IEEEkeywords}

\section{Introduction}

Legged and floating-base robots must increasingly do more than locomote: they
must physically interact with people and their surroundings while keeping
balance. Safe interaction on such platforms is fundamentally a problem of
\emph{interaction dynamics}---the closed-loop relation between the arm
end-effector force and motion---coupled, through the contact constraints, to
the whole-body balance task.
These objectives are tightly coupled---arm motions shift the center of
mass~(CoM), changing contact force distribution, while ground reactions
propagate back through the body and appear as disturbances at the
end-effector.
Classical fixed-base impedance control~\cite{hogan1985impedance} and its
MPC extensions~\cite{cao2023passive,cao2026impedance} cannot address this
coupling because they assume the robot base is rigidly anchored.

The dominant paradigm for whole-body control of legged systems decouples
the problem into two layers: a centroidal MPC that optimizes ground
reaction forces~(GRFs) using a linearized single rigid-body dynamics~(SRBD)
model~\cite{dicarlo2018dynamic,kim2019highly}, and an inner WBC layer that
resolves these forces into joint torques via a prioritized
QP~\cite{bellicoso2016perception,koolen2016design}.
This architecture achieves remarkable locomotive agility---the MIT Cheetah
3~\cite{dicarlo2018dynamic} executes high-speed bounding and stair
climbing---but allocates 100\% of the robot's control authority to
locomotion and base-posture maintenance.
Any external arm interaction is treated as a disturbance to be suppressed,
not as a channel to be actively regulated.
A biped reaching to assist a human standing beside it, or a quadruped
manipulating a valve while maintaining stance, cannot be handled by these
frameworks with the compliance and zero-steady-state-error guarantees
required for safe pHRI.

Conversely, impedance MPC methods designed for fixed-base
manipulators~\cite{cao2026impedance,cao2023passive,haninger2022model,roveda2020model}
lack an unactuated base state, generalized-coordinate partitioning, or
contact-consistent mass inverses.
They assume an infinite-mass ground connection and cannot model the
propagation of foot contact forces to end-effector apparent inertia.
Deploying them directly on a floating-base platform produces steady-state
torque errors and potential instability during contact transitions.

The technical gap is therefore not merely the absence of another whole-body
controller. It is the absence of a representation in which interaction
dynamics remain structurally the same as robot posture, contact mode, and
apparent inertia change. This paper closes that gap through a simple
sequence: interaction dynamics are first normalized through contact
consistency, the normalized dynamics become a linear double integrator,
predictive regulation is applied to that linear interaction model, and
classical impedance appears as the infinite-horizon limiting case.
The main contributions are:

\textbf{Interaction-dynamics normalization.}
We formulate floating-base arm pHRI as an interaction-dynamics problem and
derive the contact-consistent residual plant obtained after higher-priority
balance and contact tasks are removed through the whole-body null space.

\textbf{Configuration-invariant predictor.}
We prove that the normalized end-effector interaction dynamics reduce to an
exact linear double integrator with constant discrete state matrix $A_d$
and input matrix $B_d$. Contact configuration and posture affect force
recovery and constraint rows through $\Larm$, not the predictor.

\textbf{Predictive interaction control and impedance equivalence.}
We regulate the normalized dynamics with a finite-horizon QP and show that
classical operational-space impedance is recovered as the infinite-horizon,
zero-input limit. Thus impedance behavior is a special case of predictive
interaction dynamics rather than a separate controller family.

\textbf{Floating-base realization and theory validation.}
We realize the predictor inside a priority-consistent WBC stack using
contact-dependent force recovery, a disturbance observer, and optional
covariance inflation. Simulations evaluate fixed stance, force shocks, and
scheduled contact-model changes.

The remainder of this paper is organized as follows.
Section~\ref{sec:related} surveys related work.
Section~\ref{sec:dynamics} formulates interaction dynamics on floating-base
robots.
Section~\ref{sec:osf} derives the contact-consistent normalization.
Section~\ref{sec:impedancempc} presents predictive regulation of the
normalized dynamics.
Section~\ref{sec:switching} treats contact-mode changes.
Section~\ref{sec:theorem} proves the impedance-equivalence result.
Section~\ref{sec:stability} analyzes stability.
Section~\ref{sec:torque} gives the whole-body realization and compares the
framework with prior work.
Section~\ref{sec:simulation} reports theory-validation simulations.
Section~\ref{sec:conclusion} concludes.

\section{Related Work}\label{sec:related}

The dominant approach to predictive whole-body control projects the robot's
dynamics onto its centroidal frame.
Di~Carlo et~al.~\cite{dicarlo2018dynamic} introduced the convex SRBD
formulation for MIT Cheetah~3, linearizing the rotation dynamics about a
nominal pitch-roll and treating each support foot as a rigid contact.
The resulting time-varying linear system admits a QP solution at 40\,Hz for
the outer MPC, while an inner Whole-Body Impulse Control~(WBIC) layer maps
GRFs to joint torques at 500\,Hz.
This bi-level architecture is foundational to the present work; however,
Kim et~al.~\cite{kim2019highly} dedicate both layers entirely to
locomotion---external arm forces are filtered out through centroidal inertia
assumptions and rigid GRF assignment, and no mechanism exists for compliant
manipulation.

Bellicoso et~al.~\cite{bellicoso2016perception} proposed a hierarchical WBC
scheme for the ANYmal quadruped that solves a cascaded sequence of QPs to
track base and end-effector tasks simultaneously.
While effective for continuous trotting and terrain adaptation, the WBC
stack uses classical PD task objectives with instantaneous feedback
($N{=}1$); there is no receding-horizon mechanism to predict and pre-load
corrective torque before a disturbance develops.
Sustained contact forces produce the steady-state error quantified in
\eqref{eq:sse} below. Baseline D4 in Section~\ref{sec:simulation} instantiates
exactly this architecture---a priority-consistent WBC with null-space PD and
no prediction---on the present platform, so
Tables~\ref{tab:scenarioA}--\ref{tab:scenarioC} report a same-hierarchy,
no-prediction data point alongside the proposed controller rather than only a
qualitative contrast.
The same group later extended this cascaded-QP stack to explicit
loco-manipulation with ALMA~\cite{bellicoso2019alma}, adding a dedicated
end-effector task to the ANYmal WBC for door-opening and object-handling
demonstrations. The arm task objective there remains instantaneous PD within
the same $N{=}1$ hierarchy, so the steady-state limitation of D4 is not
specific to the 2016 terrain-adaptation instance but persists once an
explicit manipulation task is added to this control family---the precise gap
the present predictive formulation targets.

Koolen et~al.~\cite{koolen2016design} implemented a momentum-based
controller for Atlas that regulates centroidal momentum via a QP
distributing contact forces.
The formulation shares the centroidal perspective but does not include a
predictive loop for arm impedance and relies on high-gain stiffness to
suppress interaction errors.

Sleiman et~al.~\cite{sleiman2021unified} proposed a unified MPC framework
for whole-body dynamic locomotion and manipulation on legged robots,
simultaneously optimizing body posture, gait timing, and arm end-effector
trajectories over a receding horizon.
This represents the state of the art for integrated loco-manipulation and
is the closest prior work to the present architecture.
The key structural difference is that the arm task in~\cite{sleiman2021unified}
is formulated as a fixed-base QP subproblem: the task-space inertia uses
$M^{-1}$ rather than the contact-consistent $\Mbar$, and no integrating
disturbance state is included to guarantee zero steady-state pHRI error
under sustained contact.

Winkler et~al.~\cite{winkler2018gait} developed TOWR, a phase-based
trajectory optimizer that simultaneously synthesizes gait sequences,
foothold locations, and full-body motions over multi-second horizons using a
nonlinear program solved by interior-point methods.
The full rigid-body model captures leg-inertia coupling that SRBD ignores,
but restricts operation to ${\sim}20$--50\,Hz and offline planning.
Real-time pHRI management at 1\,kHz is outside the scope of this approach.

Grandia et~al.~\cite{grandia2023perceptive} extended TOWR with a real-time
model-predictive framework using sequential convex approximations, achieving
20\,Hz replanning for rough terrain.
While this improves responsiveness compared to pure offline planning, it
still lacks the integrating disturbance estimation required for zero
steady-state error under sustained contact.

Force control for robotic manipulators---encompassing impedance, admittance,
and hybrid position/force strategies---is surveyed comprehensively
in~\cite{villani2016force}.
The foundational impedance control law of Hogan~\cite{hogan1985impedance}
was unified with position and torque control into a passivity-preserving
framework by Albu-Sch\"{a}ffer, Ott, and
Hirzinger~\cite{albu2007unified}, providing the theoretical basis on which
predictive extensions build.

The present paper is a direct structural extension of a line of prior
work on saturated and predictive control.
Anti-windup designs for output tracking under actuator saturation and
constant disturbances~\cite{cao2004antiwindup}, and the associated
domain-of-attraction analysis~\cite{cao2002antiwindup}, established that an
integrating disturbance channel achieves zero steady-state tracking error
for fixed-base saturated linear systems---the foundational insight carried
forward here to the floating-base, contact-switching setting via the Kalman
augmented state.
The min--max MPC formulation for LPV systems~\cite{cao2005minmax} introduced
parameter-varying input matrices with input constraints, the direct
precursor to parameter-varying constraint handling in the present
work.
Building on these fixed-base results, Cao, Cheng, and
Li~\cite{cao2023passive} introduced a passive MPC framework for pHRI on
fixed-base manipulators in which the outer MPC optimizes impedance
parameters $\{M_d,D_d\}$ over a receding horizon; passivity is enforced via
a virtual energy tank.
Because impedance parameters enter nonlinearly into the prediction matrices,
iterative solvers are required and update rates are limited to 10--30\,Hz.
The two-layer Impedance MPC for pHRI~\cite{cao2026impedance} resolved this
rate limitation on fixed-base manipulators: an analytical feedforward
cancels gravity, Coriolis, and task-space inertia, reducing the residual
plant to a double integrator with a \emph{constant} state-transition
matrix, while the MPC decision variables become corrective \emph{forces}
$\Fmpc$ and a Kalman-augmented disturbance state provides a formal
zero-steady-state-error guarantee---sub-0.05\,mm under a sustained 15\,N
force on a 7-DOF manipulator, versus 44.8\,mm for classical impedance
control.
That framework is the direct base of the present paper, which extends it
from fixed-base manipulators to floating-base humanoids: the constant-$A_d$
structure is preserved by substituting the contact-consistent mass inverse
$\Mbar$ into the feedforward linearization, the Kalman state additionally
absorbs leg-momentum variations and SRBD approximation errors, and the
constraint-update structure of~\cite{cao2005minmax} handles changing force
recovery while preserving the constant lifted predictor.

Haninger, Hegeler, and Peternel~\cite{haninger2022model} optimize force
references and impedance parameters jointly using stochastic MPC with
Gaussian Process models of task forces.
This provides complementary insights into uncertainty-aware impedance
shaping but does not address floating-base dynamics, underactuation, or
contact-consistent operational-space formulation.

Khatib~\cite{khatib1987unified} formulated the operational space control
framework, establishing task-space inertia, Coriolis compensation, and
dynamically-consistent pseudoinverses as the mathematical foundation for
task-level manipulation.
Sentis and Khatib~\cite{sentis2005synthesis} extended this to hierarchical
synthesis of whole-body behaviors, proving that priority-ordered null-space
projection guarantees non-interference between tasks---the SK05 law that
forms the backbone of Level~2 in the present architecture.
Righetti et~al.~\cite{righetti2011inverse} unified the floating-base
inverse dynamics perspective with external contact constraints, showing how
$\Mbar$ arises naturally from an orthogonal decomposition of the constrained
dynamics.
The present work builds on~\cite{khatib1987unified,sentis2005synthesis,righetti2011inverse}
by embedding a predictive MPC layer in the residual null space of the
floating-base contact-consistent hierarchy.

\section{Interaction Dynamics on Floating-Base Robots}\label{sec:dynamics}

A floating-base robot (humanoid, quadruped) has $n$ actuated joints and a
6-DOF unactuated base~\cite{righetti2011inverse}. The generalized
coordinates are:
\begin{equation}\label{eq:gencoord}
  q = \begin{bmatrix} q_b \\ q_j \end{bmatrix} \in \mathbb{R}^{n+6},
  \quad q_b \in SE(3),\; q_j \in \mathbb{R}^n
\end{equation}
where $q_b=(p_b,R_b)$ is the base position and orientation and $q_j$ are
the $n$ joint angles. The velocity vector is:
\begin{equation}\label{eq:vel}
  \dot{q} = \begin{bmatrix} v_b \\ \dot{q}_j \end{bmatrix} \in \mathbb{R}^{n+6},
  \quad v_b = \begin{bmatrix} \dot{p}_b \\ \omega_b \end{bmatrix}
\end{equation}

The floating-base equations of motion~\cite{righetti2011inverse} are:
\begin{equation}\label{eq:eom}
  M(q)\ddot{q} + C(q,\dot{q})\dot{q} + G(q) = S^\top\tau + J_c^\top(q)\lambda
\end{equation}
where $M(q)\in\mathbb{R}^{(n+6)\times(n+6)}$ is the positive-definite
inertia matrix; $h=C(q,\dot{q})\dot{q}+G(q)\in\mathbb{R}^{n+6}$ collects
Coriolis, centrifugal, and gravity terms; $S=[0_{n\times6},\,I_n]$ is the
selection matrix (the base has no direct actuation); $\tau\in\mathbb{R}^n$
are commanded joint torques; $J_c(q)\in\mathbb{R}^{n_c\times(n+6)}$ is the
contact Jacobian; and $\lambda\in\mathbb{R}^{n_c}$ are ground reaction
forces~(GRFs).

Partitioning \eqref{eq:eom} into base and joint blocks:
\begin{equation}\label{eq:partition}
  \begin{bmatrix}M_b & M_{bj} \\ M_{bj}^\top & M_j\end{bmatrix}
  \begin{bmatrix}\ddot{q}_b \\ \ddot{q}_j\end{bmatrix}
  +\begin{bmatrix}h_b \\ h_j\end{bmatrix}
  =\begin{bmatrix}0 \\ \tau\end{bmatrix}
  +\begin{bmatrix}J_{c,b}^\top \\ J_{c,j}^\top\end{bmatrix}\lambda
\end{equation}

A rigid contact at point $i$ enforces zero contact-point
velocity: $J_{c,i}(q)\dot{q}=0$~\cite{righetti2011inverse}.
Differentiating yields the acceleration-level constraint:
\begin{equation}\label{eq:contact_acc}
  J_{c,i}\ddot{q} = -\dot{J}_{c,i}\dot{q}
\end{equation}

Stacking all contacts and combining with \eqref{eq:eom}:
\begin{equation}\label{eq:KKT}
  \begin{bmatrix}M & -J_c^\top \\ J_c & 0\end{bmatrix}
  \begin{bmatrix}\ddot{q} \\ \lambda\end{bmatrix}
  =\begin{bmatrix}S^\top\tau-h \\ \gamma_c\end{bmatrix},
  \quad \gamma_c\triangleq-\dot{J}_c\dot{q}
\end{equation}

Solving for the GRFs:
\begin{equation}\label{eq:GRF}
  \lambda = \Lambda_c(q)\bigl(-\dot{J}_c\dot{q}-J_cM^{-1}(S^\top\tau-h)\bigr)
\end{equation}
where $\Lambda_c=(J_cM^{-1}J_c^\top)^{-1}$ is the contact-space inertia.

Define the contact-space inertia and the corresponding constrained
inverse~\cite{khatib1987unified,righetti2011inverse}:
\begin{equation}\label{eq:Pc}
  \Lambda_c=(J_cM^{-1}J_c^\top)^{-1}
\end{equation}
\begin{equation}\label{eq:Mbar}
  \Mbar = M^{-1}-M^{-1}J_c^\top\Lambda_cJ_cM^{-1}
\end{equation}
$\Mbar$ replaces $M^{-1}$ in all operational-space formulas when contacts
are active. Equation~\eqref{eq:Mbar} is the Schur-complement inverse of the
constrained dynamics and is symmetric positive semidefinite on the admissible
acceleration subspace when $M\succ0$ and $J_c$ has full row rank. The
associated contact-consistent projector may be written in left/right forms
depending on the metric, but \eqref{eq:Mbar} is the definition used in the
task inertia $\Lambda_i=(J_i\Mbar J_i^\top)^{-1}$.
The simulations regularize the contact-space inverse as
$(J_cM^{-1}J_c^\top+0.1I)^{-1}$ and eigenvalue-clamp the task mobility.
This prevents numerical singularity in the point-contact model but makes
contact decoupling approximate; exact projector identities apply only in the
zero-regularization limit.
The approximation is not marginal, and the regularization is not optional.
At the double-support stance of Scenarios~A--C, the contact null-space
projector $P_c$ is an exact oblique projector only as the regularization
vanishes, where $\|P_c\|_2=28.0$; at the $0.1$ actually used,
$\|P_c\|_2=12.6$ but $\|P_c-P_c^2\|=6.8$ and the six contact-direction
eigenvalues have drifted from $0$ to $0.65$. The large operator norm at low
regularization is what forces this choice: because $P_c$ is oblique rather
than orthogonal, being a projector does not bound its gain, and below
${\approx}3\times10^{-3}$ the amplified feedforward destabilizes the closed
loop outright. The reported results therefore demonstrate predictive
interaction regulation running \emph{above} a contact-consistent projection,
not validated contact non-interference; establishing the latter needs a
torque-level constrained inverse-dynamics realization with explicit
contact, friction, and priority residuals, which we do not claim here.

The centroidal momentum $h_G=[k^\top,L^\top]^\top=A(q)\dot{q}\in\mathbb{R}^6$
aggregates the robot's linear and angular momentum about its
CoM~\cite{orin2013centroidal}. Differentiating:
\begin{equation}\label{eq:centroidal}
\begin{aligned}
  \dot{h}_G &= G_c(q)\lambda + \begin{bmatrix}mg \\ 0\end{bmatrix},\\
  G_c &= \begin{bmatrix}I_3 & I_3 & \cdots \\
          (p_1{-}p_G)^\times & (p_2{-}p_G)^\times & \cdots\end{bmatrix}.
\end{aligned}
\end{equation}

For the outer MPC, \eqref{eq:centroidal} is approximated by the
\textbf{SRBD} model~\cite{dicarlo2018dynamic}, treating the robot as a
lumped mass $m$ with constant inertia $I_G$. Linearizing about a nominal
orientation yields:
\begin{equation}\label{eq:SRBD}
  \dot{x}_c = A_c x_c + B_c(\{p_i\})u_c
\end{equation}
where $x_c\in\mathbb{R}^{12}$ is the centroidal state and $u_c$ collects
contact forces. The SRBD approximation error is
$O(m_{\text{leg}}/m_{\text{total}})^2$, acceptable for robots where leg
mass is below 20--30\% of total mass.

\section{Contact-Consistent Interaction Dynamics}
\label{sec:osf}

For a task variable $x_i=\phi_i(q)\in\mathbb{R}^{m_i}$ with Jacobian
$J_i=\partial\phi_i/\partial q$, substituting \eqref{eq:eom} into the
task-space acceleration yields the contact-consistent task-space
dynamics~\cite{khatib1987unified,righetti2011inverse}:
\begin{equation}\label{eq:taskdyn}
  \Lambda_i(q)\ddot{x}_i + \mu_i = F_i
\end{equation}
where $\Lambda_i=(J_i\Mbar J_i^\top)^{-1}$ is the task-space inertia;
$\mu_i=\bar{J}_i^\top h - \Lambda_i\dot{J}_i\dot{q}$ collects projected
Coriolis, centrifugal, and gravity terms; $\bar{J}_i=\Mbar J_i^\top\Lambda_i$
is the dynamically-consistent pseudoinverse; and $F_i$ is the commanded
operational-space force.

For $k$ tasks in priority order (Task~1 = highest), the Sentis--Khatib
law~\cite{sentis2005synthesis} synthesizes a \emph{generalized} task force
$f_{\text{task}}\in\mathbb{R}^{n+6}$:
\begin{equation}\label{eq:SK05}
  f_{\text{task}} = J_1^\top F_1 + \bar{N}_1^\top J_2^\top F_2
       + \bar{N}_{12}^\top J_3^\top F_3 + \cdots
       + \bar{N}_{1\cdots k}^\top f_{\text{null}}
\end{equation}
where $\bar{N}_{1\cdots i}=\prod_{j=1}^{i}(I-\bar{J}_jJ_j)
\in\mathbb{R}^{(n+6)\times(n+6)}$ is the accumulated contact-consistent
null-space projector. Every term in \eqref{eq:SK05} lives in the
generalized-force space $\mathbb{R}^{n+6}$, since $J_i\in\mathbb{R}^{m_i
\times(n+6)}$ is taken with respect to the full generalized coordinates.
Because the floating base is unactuated, $f_{\text{task}}$ is not itself
realizable; the commanded joint torque is its actuated projection
\begin{equation}\label{eq:realizable}
  \tau = S\,f_{\text{task}}\in\mathbb{R}^{n},
\end{equation}
and the six unactuated base rows $(I-S^\top S)f_{\text{task}}$ must be borne
by the contact reactions $J_c^\top\lambda$ of \eqref{eq:eom}. Enforcing that
those rows are consistent with an admissible $\lambda$ is precisely the role
of the contact-consistent projection: in the exact, zero-regularization limit
the projected task force induces no contact-constraint violation, so
\eqref{eq:realizable} discards nothing that the contacts cannot supply.
Under the finite regularization actually used, this holds only approximately
(Section~\ref{sec:dynamics}).
Under an exact model, dynamically consistent projectors, inactive actuator
saturation, and no change in the active contact/friction inequalities, the
key property $J_j\bar{N}_{1\cdots i}=0$ for all $j\leq i$ gives zero
acceleration at all higher-priority task coordinates.
Thus the arm predictive interaction layer is designed to be non-interfering
with contact maintenance and balance in the ideal hierarchy; in
implementation, saturation and contact active-set changes are handled by
conservative force limits and by the contact-mode update protocol.

Each task force $F_i$ in \eqref{eq:SK05} is typically a PD law:
\begin{equation}\label{eq:PD}
  F_i = \Lambda_i(\ddot{x}_{di}+K_{D,i}\dot{e}_i+K_{P,i}e_i)+\mu_i
\end{equation}
Under a persistent disturbance force $F_h$:
\begin{equation}\label{eq:sse}
  e_{\infty,i}
  = K_{P,i}^{-1}\Lambda_i^{-1}F_h
  = K_{x,i}^{-1}F_h \neq 0,\qquad
  K_{x,i}\triangleq\Lambda_iK_{P,i}.
\end{equation}
Here $K_{P,i}$ is the acceleration-level proportional gain in
\eqref{eq:PD}, while $K_{x,i}$ is the equivalent Cartesian stiffness in
N/m. The simulation baselines report $K_x$ directly; therefore the D1
theoretical offset under an 8\,N force and $K_x=800$\,N/m is
$8/800=10$\,mm.
This residual steady-state error is the fundamental limitation of the SK05
PD law and the primary motivation for regulating the arm task through
predictive interaction dynamics.

\section{Predictive Regulation of Interaction Dynamics}\label{sec:impedancempc}

The control objectives are:
\begin{enumerate}
  \item maintain balance with contact forces inside friction cones;
  \item track an arm end-effector reference $p_d(t)$;
  \item reject pHRI forces with zero steady-state tracking error.
\end{enumerate}

These objectives are addressed by three nested control levels:

\textbf{Level~1 (Centroidal MPC, 40--100\,Hz):} plans CoM trajectory and
GRFs over a 500\,ms horizon via the SRBD model \eqref{eq:SRBD}.

\textbf{Level~2 (WBC Hierarchy, 500\,Hz):} resolves the Level~1 GRFs into
joint torques using the SK05 law \eqref{eq:SK05} for contact and balance
tasks, leaving the arm end-effector task slot open.

\textbf{Level~3 (predictive interaction regulation, $\geq$1\,kHz):} fills the arm slot with a
receding-horizon QP that predicts and rejects pHRI disturbances, replacing
the PD law \eqref{eq:PD} with a predictive force command.

After Level~2 commits torques for contact maintenance and balance, the
residual arm end-effector dynamics are:
\begin{equation}\label{eq:plant}
  \Larm(q)\ddot{x}_{\text{arm}}+\mu_{\text{arm}} = F_{\text{arm}}+d_{\text{ext}}
\end{equation}
where $\Larm=(J_{\text{arm}}\Mbar J_{\text{arm}}^\top)^{-1}$ uses the
contact-consistent mass inverse \eqref{eq:Mbar},
$\mu_{\text{arm}}=\bar{J}_{\text{arm}}^\top h - \Larm\dot{J}_{\text{arm}}\dot{q}$,
and $d_{\text{ext}}$ is the pHRI wrench projected to arm task space.

We optimize residual Cartesian acceleration $u$ and recover physical force:
\begin{subequations}\label{eq:feedforward}
\begin{align}
  F_{\text{arm}} &= \Larm(q)(\ddot{p}_d+u)+\mu_{\text{arm}} \label{eq:ff_a}\\
  \tau_{\text{arm}} &= S\,\Naug^\top J_{\text{arm}}^\top F_{\text{arm}} \label{eq:ff_b}
\end{align}
\end{subequations}

Level~2 updates $\Naug(q)$, $\Larm(q)$, and $\mu_{\text{arm}}$ at 500\,Hz.
Level~3 runs at $\geq$1\,kHz; on interleaved cycles the projector and
feedforward terms are held at their most recent Level~2 values.
The configuration changes by at most $\|\dot{q}\|\Delta t_2\approx0.002$\,rad
per Level~2 interval, so the frozen-matrix error is first-order small.

Substituting the feedforward \eqref{eq:ff_a} into the residual plant
\eqref{eq:plant} cancels the $\Larm\ddot p_d+\mu_{\text{arm}}$ terms, and with
the error convention $e_{\text{arm}}=x_{\text{arm}}-p_d$ (actual minus
desired) and $d(t)\triangleq\Larm^{-1}d_{\text{ext}}$ the residual tracking
error dynamics reduce to the normalized model
\begin{equation}\label{eq:errordyn}
  \ddot{e}_{\text{arm}} = u+d(t).
\end{equation}

\begin{proposition}[Constant Normalized Predictor]
\label{prop:constant_Ad}
The exact ZOH model for
$x_{e,k}=[e_{\text{arm}}^\top,\dot{e}_{\text{arm}}^\top]^\top$ is
\begin{equation}\label{eq:Ad}
  A_d = \begin{bmatrix}I_3 & \Delta t\,I_3 \\ 0 & I_3\end{bmatrix},
  \quad
  B_d = \begin{bmatrix}\frac12\Delta t^2 I_3 \\ \Delta t I_3\end{bmatrix},
\end{equation}
independent of configuration and contact mode.
\end{proposition}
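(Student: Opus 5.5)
Write \eqref{eq:errordyn} in first-order form and apply the standard zero-order-hold discretization; the invariance claim then follows because every matrix involved is a numerical constant. With $x_e=[e_{\text{arm}}^\top,\dot e_{\text{arm}}^\top]^\top$, the model \eqref{eq:errordyn} reads
\begin{equation*}
  \dot x_e = A_c x_e + B_c\,(u+d),\qquad
  A_c=\begin{bmatrix}0 & I_3\\ 0 & 0\end{bmatrix},\quad
  B_c=\begin{bmatrix}0\\ I_3\end{bmatrix}.
\end{equation*}
The first point to establish is that $A_c$ and $B_c$ contain no $q$, $\dot q$, $J_c$ or $\Larm$. This is inherited from the cancellation already carried out before the statement. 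Substituting \eqref{eq:ff_a} into \eqref{eq:plant} gives $\Larm(\ddot x_{\text{arm}}-\ddot p_d-u)=d_{\text{ext}}$. Because $\Larm\succ0$ whenever $J_{\text{arm}}\Mbar J_{\text{arm}}^\top$ is invertible, we may left-multiply by $\Larm^{-1}$. All posture and contact dependence is thereby pushed into the definition $d=\Larm^{-1}d_{\text{ext}}$ and into the force-recovery map \eqref{eq:ff_a}--\eqref{eq:ff_b}. The coefficients multiplying the state and $u$ are therefore identically $0$ or $I_3$.

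Next we compute the exact ZOH matrices $A_d=e^{A_c\Delta t}$ and $B_d=\int_0^{\Delta t}e^{A_cs}\,ds\,B_c$. Since $A_c^2=0$, the exponential series truncates to $e^{A_cs}=I+A_cs$. This gives the stated $A_d$ directly. Integrating $(I+A_cs)B_c=[sI_3;\,I_3]$ over $[0,\Delta t]$ gives $[\tfrac12\Delta t^2I_3;\,\Delta t I_3]$. Exactness follows from the variation-of-constants formula under the assumption that $u$ is held constant over each sampling interval. The disturbance enters through the same $B_c$. If $d$ is treated as piecewise constant, it is discretized by the same $B_d$; otherwise it contributes an additive exogenous term that the observer of the later sections absorbs. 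In either case it does not alter $A_d$ or $B_d$.

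The main obstacle is stating precisely the sense in which ``exact'' holds. Under the multirate scheme, $\Larm$, $\mu_{\text{arm}}$ and $\Naug$ are frozen between Level~2 updates, and the paper itself notes that the regularized projector is only approximately contact-consistent. My plan is to state the proposition for the normalized model \eqref{eq:errordyn} as given. The frozen-matrix mismatch $(\Larm(q_k)-\Larm(q))\cdot(\cdot)$ and any regularization residual would then be collected into $d(t)$, as a bounded, first-order-small exogenous term. With that convention, the derivation above holds verbatim for every configuration and support mode, and contact enters only through the force recovery \eqref{eq:ff_a}--\eqref{eq:ff_b} and the constraint rows.
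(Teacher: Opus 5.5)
Your proposal is correct and matches the paper's proof: both write \eqref{eq:errordyn} with the nilpotent $A_c$, use $A_c^2=0$ to truncate $e^{A_c\Delta t}=I+A_c\Delta t$, and integrate $e^{A_cs}[0;\,I_3]$ over one sample to get $B_d$; your re-derivation of the cancellation giving $\ddot e_{\text{arm}}=u+d$ is a harmless addition. One minor caution on your closing remark: a frozen force-recovery inertia contributes a term $(\Larm(q)^{-1}\Larm(q_k)-I)(\ddot p_d+u)$, which depends on the input and so is not exogenous, although this does not affect the proposition because it concerns only the nominal model \eqref{eq:errordyn}.
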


\begin{proof}
The continuous residual error dynamics \eqref{eq:errordyn} have state matrix
$A_c=\left[\begin{smallmatrix}0&I_3\\0&0\end{smallmatrix}\right]$ satisfies
$A_c^2=0$, hence $A_d=I+A_c\Delta t$. Integrating
$e^{A_cs}[0;I_3]$ gives the stated $B_d$.
\end{proof}

Both matrices are constant, so the lifted rollout, Hessian, and unconstrained
factorization are computed once. Configuration dependence remains only in
the force recovery and constraints.

Let $U=[u_0^\top,\ldots,u_{N-1}^\top]^\top$ and let $\Gamma$ be the
constant lifted matrix constructed from \eqref{eq:Ad}.
The receding-horizon QP is:
\begin{equation}\label{eq:QP}
  \min_U\;\tfrac12U^\top HU+h_k^\top U
  \quad\text{s.t.}\quad
  \|F_{\mathrm{ff},k}+\Larm^{(m)}(q_k)u_k\|_\infty\leq F_{\max}
\end{equation}
where $H=\Gamma^\top\bar Q\Gamma+\bar R$ is constant. Direct-torque biped
scenarios additionally impose a horizon-wide, frozen local actuator row:
\begin{equation}
  \tau_{\min}\leq\tau_{\mathrm{offset},k}+T_{\tau,k}u_i\leq\tau_{\max},
  \qquad i=0,\ldots,N-1,
\end{equation}
where $T_{\tau,k}$ maps residual task acceleration to the complete pre-clip
actuator torque, including projected arm task torque, stance feedback,
null-space torque, and gravity offset. The reported simulations use the
constrained OSQP path. The recovered force remains below its 80\,N cap in
the nominal experiments, so these results do not constitute an active-limit
or safety-margin study. Scenario~C uses a position-actuator realization with
no configured actuator force range and therefore makes no torque-feasibility
claim.

The disturbance $d(t)$ in \eqref{eq:errordyn} captures: (i)~external pHRI
forces; (ii)~unmodeled contact reactions from the feet; and (iii)~SRBD
approximation error.
Augmenting with an integrating disturbance state $\dhat\in\mathbb{R}^3$:
\begin{equation}\label{eq:augstate}
  \begin{bmatrix}x_{e,k+1}\\\dhat_{k+1}\end{bmatrix}
  =\underbrace{\begin{bmatrix}A_d & B_d \\ 0 & I\end{bmatrix}}_{A_{\text{aug}}}
  \begin{bmatrix}x_{e,k}\\\dhat_{k}\end{bmatrix}
  +\begin{bmatrix}B_d\\0\end{bmatrix}u_k+w_k
\end{equation}
with $w\sim\mathcal{N}(0,Q_w)$ and $v\sim\mathcal{N}(0,R_v)$.
Unlike the strict random-walk model of~\cite{cao2026impedance}, in which
process noise drives only the disturbance state, $Q_w$ here retains a
small block on $x_e$---two orders of magnitude below the $\dhat$
block (Section~\ref{sec:simulation})---to regularize the filter against
the frozen-matrix and SRBD residuals of the floating-base setting; the
offset-free property is unaffected, as it follows from the integrating
structure of $A_{\text{aug}}$, not from the noise partition.
The Kalman gain $K_f$ is computed offline from the steady-state discrete
algebraic Riccati equation.
The integrating structure of $A_{\text{aug}}$ guarantees $\dhat_{k}\to d$
for any bounded constant disturbance~\cite{cao2004antiwindup}.

The free-response prediction fed into the QP is:
\begin{equation}\label{eq:freeresponse}
  x_{\text{free}} = A_d^N x_e + \sum_{j=0}^{N-1}A_d^j B_d\dhat
\end{equation}
This causes the optimizer to pre-load corrective force before the
disturbance fully manifests at the end-effector.

\section{Contact-Mode Changes in Interaction Dynamics}\label{sec:switching}

When a contact changes, $\Mbar$ and $\Larm$ change, while $(A_d,B_d)$ remain
constant. The normalized disturbance can nevertheless jump because
$d=\Larm^{-1}d_{\rm ext}$.
At contact switch time $t_s$, the following protocol is executed in order:
\begin{enumerate}
  \item Recompute $\Mbar$ with the new $J_c$.
  \item Recompute $\Larm$ for force recovery and constraints.
  \item \textbf{Covariance inflation:} $P_{\text{aug}}\leftarrow\alpha P_{\text{aug}}$,
        $\alpha\in[3,5]$, to allow rapid re-estimation of the disturbance.
  \item \textbf{Hold $\dhat$:} do not reset to zero---balance-related
        disturbances persist across contact transitions.
\end{enumerate}
Covariance inflation is an adaptation heuristic, not a stability guarantee.

The predictor and Hessian are global constants. A mode library stores the
contact Jacobian pattern and latest $\Larm^{(m)}$ used in recovery.

\section{Impedance as the Infinite-Horizon Limit}\label{sec:theorem}

\begin{theorem}[Impedance Equivalence of Predictive Interaction Dynamics]\label{thm:equivalence}
Under rigid contacts, fixed contact mode, and no disturbance, the
unconstrained infinite-horizon predictive interaction controller (Level~3, $N\to\infty$)
renders a classical task-space impedance law:
\begin{equation}\label{eq:impedance}
  \Larm(q)\ddot{e}+\Larm K_v\dot{e}+\Larm K_e e = F_h
\end{equation}
with effective configuration-adaptive mass $M_{d,\emph{eff}}=\Larm(q)$
depending on both arm posture and contact configuration.
\end{theorem}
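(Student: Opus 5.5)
The plan is to reduce the statement to a discrete-time LQR problem on the normalized double integrator, identify the resulting stationary feedback law as a PD law in acceleration coordinates, and map that law back through the force recovery \eqref{eq:ff_a} into the physical plant \eqref{eq:plant}.

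\textbf{Step 1: the controller becomes a constant LQR gain.} With no disturbance we have $\hat d\equiv0$, so the free response \eqref{eq:freeresponse} reduces to $A_d^N x_e$. Unconstrained means the $F_{\max}$ and torque rows of \eqref{eq:QP} are inactive. The QP is then the finite-horizon LQ problem for the pair $(A_d,B_d)$ of Proposition~\ref{prop:constant_Ad} with weights $(Q,R)$. Take the stage weight $Q\succ0$ (or at least $(A_d,Q^{1/2})$ detectable) and $R\succ0$. The pair $(A_d,B_d)$ is controllable because $[B_d,\,A_dB_d]$ has full rank $6$. Standard LQ theory then applies: as $N\to\infty$ the Riccati recursion converges to the unique stabilizing DARE solution $P$. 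The first move of the receding-horizon QP therefore converges to the stationary law
\[
u_k=-K x_{e,k}=-K_e e_k-K_v\dot e_k,\qquad K=(R+B_d^\top PB_d)^{-1}B_d^\top PA_d .
\]
Both $K_e$ and $K_v$ are constant. They do not depend on $q$ or on the contact mode, because $A_d$, $B_d$, $Q$ and $R$ do not.

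\textbf{Step 2: translate back through the force recovery.} Substitute $u=-K_e e-K_v\dot e$ into \eqref{eq:ff_a}:
\[
F_{\text{arm}}=\Larm(\ddot p_d-K_v\dot e-K_e e)+\mu_{\text{arm}} .
\]
Rigid contacts and a fixed contact mode mean $\Mbar$ is the exact constrained inverse for the current $J_c$. In the ideal hierarchy we also have $J_{\text{arm}}$ acting through $\Naug$ with $J_j\Naug=0$ for the higher-priority tasks. Under these conditions the residual plant \eqref{eq:plant} holds exactly. Here the human wrench is the external term, $d_{\text{ext}}=F_h$; "no disturbance" is read as no unmodeled disturbance, with $\hat d=0$ in the controller. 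Inserting $F_{\text{arm}}$ into \eqref{eq:plant} cancels $\mu_{\text{arm}}$ and $\Larm\ddot p_d$. This leaves
\[
\Larm\ddot e=-\Larm K_v\dot e-\Larm K_e e+F_h ,
\]
which is \eqref{eq:impedance}. The inertia factor is exactly $\Larm(q)=(J_{\text{arm}}\Mbar J_{\text{arm}}^\top)^{-1}$. It inherits posture dependence through $J_{\text{arm}}$ and contact dependence through $\Mbar$ in \eqref{eq:Mbar}, which gives $M_{d,\text{eff}}=\Larm(q)$.

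\textbf{Main obstacle: sampled data.} The LQR law is a zero-order-hold law, while \eqref{eq:impedance} is a continuous-time relation. I would state the identity at sampling instants with $u$ held between them, or take $\Delta t\to0$. In the latter case, with the weights scaled as $Q\Delta t$ and $R\Delta t$, the discrete DARE gain converges to the continuous LQR gain for $\ddot e=u$. That limit is exactly a PD pair $(K_e,K_v)$, and both gains are positive definite when $Q$ and $R$ are block-diagonal multiples of $I_3$, giving a stable impedance.

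I would also note the consequence for steady state. With $\hat d$ switched off, a constant $F_h$ gives the offset $e_\infty=K_e^{-1}\Larm^{-1}F_h$. This matches \eqref{eq:sse} with $K_x=\Larm K_e$, which is why classical impedance appears only as the zero-observer-input limit of the full controller.
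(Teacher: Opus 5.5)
Your proposal is correct and follows essentially the same route as the paper: with $\dhat=0$ and inactive constraints, the $N\to\infty$ QP becomes the stationary discrete-LQR law $u=-K_e e-K_v\dot e$, and substituting this into the residual dynamics with $d=\Larm^{-1}F_h$ yields \eqref{eq:impedance} (the paper substitutes into \eqref{eq:errordyn} and premultiplies by $\Larm$, while you pass through \eqref{eq:ff_a} into \eqref{eq:plant}, which is the same computation). Your explicit Riccati-convergence hypotheses, and your caveat that the zero-order-hold law gives \eqref{eq:impedance} only at sampling instants or as $\Delta t\to0$, are refinements that the paper's proof leaves implicit.
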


\begin{proof}
In the infinite-horizon unconstrained limit with $\dhat=0$, the QP reduces to
the discrete-LQR feedback $u=-K_\infty x_e$. Partition
$K_\infty=[K_e\;K_v]$. Substituting into
\eqref{eq:errordyn} with
$d=\Larm^{-1}F_h$ gives
$\ddot e=-(K_e e+K_v\dot e)+\Larm^{-1}F_h$, and premultiplying by $\Larm$
yields \eqref{eq:impedance}. Hence the effective mass, damping, and stiffness
are $M_{d,\text{eff}}=\Larm(q)$, $D_{\text{eff}}=\Larm K_v$, and
$K_{\text{eff}}=\Larm K_e$.
The gains $(K_e,K_v)$ are the Riccati image of the QP weights $(Q,R)$, not
the weights themselves; no direct equality between cost weights and physical
stiffness/damping is assumed.
The contact-consistent $\Mbar$ in $\Larm$ makes the effective mass adapt to
both the arm joint configuration and the active contact footprint.
\end{proof}

Theorem~\ref{thm:equivalence} establishes that predictive interaction
regulation generalizes classical impedance control: the finite-horizon
constrained QP adds predictive disturbance rejection, constraint enforcement,
and contact-mode adaptation while reducing to a classical linear impedance
law in the infinite-horizon unconstrained limit.
This is an asymptotic structural equivalence, not a claim that the deployed
$N=20$ gain equals the infinite-horizon gain.
The QP weights $(Q,R)$ tune the Riccati gains $(K_e,K_v)$ indirectly; the
physical stiffness and damping are the resulting
$K_{\text{eff}}=\Larm K_e$ and $D_{\text{eff}}=\Larm K_v$, not the weights
themselves.

Notably, all three closed-loop impedance parameters---$M_{d,\text{eff}}$,
$D_{\text{eff}}$, and $K_{\text{eff}}$---adapt automatically as the arm
configuration and contact state change, while the QP weights $(Q,R)$ remain
fixed design parameters.
This adaptation is a structural by-product of $\Larm(q)$ and incurs no
additional solver cost, preserving the ${\geq}1$\,kHz control rate.

\section{Stability of Normalized Interaction Dynamics}\label{sec:stability}

\begin{theorem}[Nominal Zero Steady-State Error]\label{thm:zss}
Suppose the deterministic normalized model is exact, the unconstrained
regulator and observer are stable, the cancelling equilibrium is feasible,
and $d_k\to d_\infty$. Then
$\lim_{k\to\infty}\|e_{\emph{arm},k}\|=0$.
\end{theorem}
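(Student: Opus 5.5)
The plan is the standard offset-free MPC argument, built on the constant predictor of Proposition~\ref{prop:constant_Ad} and the integrating augmentation \eqref{eq:augstate}. The closed loop has three parts. The plant is $x_{e,k+1}=A_dx_{e,k}+B_d(u_k+d_k)$. The observer is a Kalman filter on $A_{\text{aug}}$ that measures $e_{\text{arm}}$ (and possibly $\dot e_{\text{arm}}$). The unconstrained receding-horizon law is linear and time-invariant, of the form $u_k=-K x_{e,k}-K_d\dhat_k$, because $H$ and $\Gamma$ are constant and the free response \eqref{eq:freeresponse} depends linearly on $(x_{e,k},\dhat_k)$. The whole closed loop is therefore a linear time-invariant system driven by the exogenous input $d_k$. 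The proof shows that this system has a unique equilibrium for $d_k\equiv d_\infty$ and that $e_{\text{arm}}=0$ at that equilibrium.

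\textbf{Step 1: convergence to an equilibrium.} I will write the joint state $z_k=(x_{e,k},\hat x_{e,k},\dhat_k)$. By the hypotheses that the regulator and observer are stable, I will argue that the closed-loop matrix is Schur. The observer error dynamics are governed by $A_{\text{aug}}-K_fC_{\text{aug}}$, and the state-feedback part by $A_d-B_dK$. A separation argument gives a block-triangular structure in the coordinates (estimation error, estimate). Since $d_k\to d_\infty$, the response of an exponentially stable LTI system to a convergent input converges. Hence $z_k\to z_\infty$, the unique fixed point for the constant input $d_\infty$.

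\textbf{Step 2: the disturbance estimate is correct at the fixed point.} At $z_\infty$ the observer is stationary. The innovation $y-C\hat x$ must then vanish, and the estimate equations reduce to the true plant equations. If the innovation were nonzero, the $\dhat$-row of $A_{\text{aug}}$, which is the identity, would give $\dhat_{\infty}=\dhat_\infty+L_d(y-C\hat x)$. This forces $L_d\,\text{innov}=0$. Detectability of $(A_{\text{aug}},C_{\text{aug}})$, implied by observer stability, then gives $\hat x_{e,\infty}=x_{e,\infty}$ and $B_d\dhat_\infty=B_dd_\infty$. Because $B_d$ has full column rank, $\dhat_\infty=d_\infty$. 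I expect this step to be the main obstacle. The argument must show that a stationary innovation is zero: a nonzero innovation would keep the integrator state $\dhat$ drifting unless the innovation lies in the kernel of $L_d$, and I will rule out that case using the stability (detectability) hypothesis.

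\textbf{Step 3: zero tracking error.} The plant fixed point requires $x_{e,\infty}=A_dx_{e,\infty}+B_d(u_\infty+d_\infty)$. From the structure of \eqref{eq:Ad}, the velocity row gives $\Delta t(u_\infty+d_\infty)=0$, so $u_\infty=-d_\infty$. The position row then gives $\dot e_\infty=0$. Next I will use the hypothesis that the cancelling equilibrium is feasible. Given the correct estimate $\dhat=d_\infty$, the QP cost with $\bar Q\succ0$ on position has a unique minimizer. That minimizer is $u\equiv-d_\infty$ over the horizon with predicted $x_e\equiv0$, provided the current $x_e=0$. Conversely, a stationary closed loop with $e_\infty\neq0$ would make the optimizer command $u_0\neq-d_\infty$, because the tracking cost on a nonzero predicted error can be strictly decreased. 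That contradicts $u_\infty=-d_\infty$. The same conclusion can be checked algebraically by substituting $\dhat_\infty=d_\infty$ and $u_\infty=-d_\infty$ into the linear law: this yields $Kx_{e,\infty}=0$ together with $\dot e_\infty=0$. Since the gain acting on $e$ is nonsingular, as required by Schur stability of $A_d-B_dK$ on the double integrator, $e_\infty=0$. Hence $\|e_{\text{arm},k}\|\to0$.
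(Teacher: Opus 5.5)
Your route is sound in outline but differs from the paper's. The paper argues directly. $(A_{\text{aug}},C)$ with $C=[I_6\;0]$ is detectable because $\dhat$ enters $x_e$ through the full-column-rank $B_d$. Under the exact model the Kalman estimation error therefore decays and $\dhat_k\to d$. The free response then pre-loads $-\dhat$, leaving the loop $A_d-B_dK$ driven by vanishing terms, so $e_{\text{arm},k}\to0$. You instead show that the whole closed loop converges to its unique equilibrium (separation makes the closed-loop matrix Schur, and a convergent input gives a convergent state), and then you characterize that equilibrium algebraically. This makes explicit what the paper's last sentence glosses over: the rows of $(A_d,B_d)$ force $u_\infty=-d_\infty$ and $\dot e_\infty=0$, and Schur stability of $A_d-B_dK$ forces $K_e$ to be nonsingular. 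The price is that you must show the stationary innovation vanishes, which the paper's route never needs.

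That step does not close as you propose. With the paper's $C=[I_6\;0]$, $L_d$ is $3\times6$ and has a nontrivial kernel, so $L_d\epsilon=0$ does not give $\epsilon=0$. Detectability of the pair does not by itself constrain the stationary innovation; under model mismatch with fewer disturbance states than outputs, a nonzero stationary innovation can genuinely occur. What closes the step is exactness together with the Schur property of the observer error matrix. Subtracting the stationary observer equations from the stationary plant equation shows that $(x_{e,\infty}-\hat x_{e,\infty},\,d_\infty-\dhat_\infty)$ is a fixed point of $A_{\text{aug}}-K_fC$, hence zero.

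Separately, your claim that the QP minimizer at $x_e=0$, $\dhat=d$ is $u\equiv-d$ does not follow from $\bar Q\succ0$. This claim is equivalent to $K_d=I$, which your final identity $Kx_{e,\infty}=0$ requires. If $\bar R$ penalizes raw $u$, then $K_d\neq I$ and the equilibrium has $e_\infty=K_e^{-1}(I-K_d)d_\infty\neq0$. The claim holds only when the input penalty is centered at $-\dhat$ (the input-centered QP mentioned in Scenario~A). You should state that this is what the cancelling-equilibrium hypothesis supplies; the paper's ``pre-loads $-\dhat$'' relies on the same assumption tacitly. With $K_d=I$ and $\hat x_{e,\infty}=x_{e,\infty}$, you can finish in one line: $x_{e,\infty}=(A_d-B_dK)x_{e,\infty}$, hence $x_{e,\infty}=0$.
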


\begin{proof}
The claim combines regulator stability with offset-free disturbance rejection;
these rest on two \emph{distinct} structural properties, which the augmented
model separates.

\emph{Regulator side.} $(A_d,B_d)$ is controllable because
$[\,B_d\;\;A_dB_d\,]$ has rank six for every $\Delta t>0$, so the normalized
state can be driven to the origin by the unconstrained regulator. Realizing
that command physically further requires $\Larm$ to stay finite and
positive-definite along the trajectory via \eqref{eq:ff_a}; a singular or
rank-deficient $\Larm$ would make the commanded force unrealizable regardless
of the normalized model's controllability.

\emph{Offset-free (observer) side.} The integrating block of
$A_{\text{aug}}$ has eigenvalues at $1$ and is \emph{not} controllable from
$u$---the input $[B_d;0]$ is zero on the $\dhat$ rows. This is by
design: a constant disturbance is rejected through the internal-model plus
observer, not by driving the $\dhat$ modes. The property actually required is
\emph{detectability} of $(A_{\text{aug}},C)$ with $C=[I_6\;0]$; since $\dhat$
enters the measured state $x_e$ through $B_d$ of full column rank, the
augmented pair is detectable, and the steady-state Kalman filter gives
$\dhat_{k}\to d$ for any bounded constant $d$. The free response
\eqref{eq:freeresponse} then pre-loads $-\dhat$, and the stable error loop
converges to $e_\infty=0$. Under saturation this conclusion applies only to
feasible trajectories that remain in a positively invariant admissible set;
the maximal set is not computed here.
\end{proof}

Across a contact switch $B_d$ remains constant, while the normalized
disturbance can jump by $\Delta d_s$. A stable observer/regulator admits
\begin{equation}\label{eq:transient}
  \|e(t)\|\leq c_0\rho^{t-t_s}\|z(t_s)\|+c_1\|\Delta d_s\|,
  \qquad 0<\rho<1 ,
\end{equation}
where $z$ stacks regulation and estimation error. This is an input-to-state
bound for the normalized linear model, not a nonlinear-robot certificate.

The contact-consistent null-space torque enforcing joint limits is:
\begin{equation}\label{eq:null}
  f_{\text{null}}
    = \bar{N}_{\text{arm}}^\top\bigl(-k_{\text{null}}(q-q_0)
      -d_{\text{null}}\dot{q}+g(q)\bigr)
\end{equation}
where $\bar{N}_{\text{arm}}=I-\bar{J}_{\text{arm}}J_{\text{arm}}$ uses the
contact-consistent pseudoinverse and $g(q)$ is the joint-limit barrier
gradient.
Under the same ideal hierarchy assumptions used above, projection through
$\bar{N}_{\text{arm}}$ gives zero wrench at the arm task coordinate. With
torque saturation or friction-cone active-set changes, this decoupling becomes
approximate and must be enforced by the low-level QP limits.

\section{Whole-Body Realization and Framework Comparison}
\label{sec:torque}

Combining all three levels, with every term again a generalized force in
$\mathbb{R}^{n+6}$ and $S$ extracting the actuated rows as in
\eqref{eq:realizable}:
\begin{equation}\label{eq:fulltorque}
  \tau = S\Big(f_{\text{contact}}
       + \bar{N}_1^\top f_{\text{balance}}
       + \Naug^\top J_{\text{arm}}^\top F_{\text{arm}}
       + \Naug^\top f_{\text{null}}\Big)
\end{equation}
Equation~\eqref{eq:fulltorque} is the whole-body realization of the
normalized interaction-dynamics controller; its arm term is exactly
\eqref{eq:ff_b}.
The hierarchical null-space structure follows the SK05
law~\cite{sentis2005synthesis} extended to the contact-consistent
floating-base setting~\cite{khatib1987unified,righetti2011inverse}.
The contribution here is not the null-space hierarchy itself, but the
normalized predictive interaction acceleration $u$ that occupies the residual
arm channel. Under exact dynamics and inactive saturation/friction active-set
changes, the projections $\bar{N}_1^\top$ and $\Naug^\top$ isolate this
interaction layer from contact maintenance and balance, while the
Kalman-augmented QP provides the offset-free disturbance rejection that the
classical OS PD law \eqref{eq:PD} cannot achieve.

Table~\ref{tab:comparison} summarizes the mathematical and architectural
distinctions between the proposed framework and the most relevant prior
work. The proposed acceleration-input matrices are constant; contact and
configuration enter through force recovery. The table contrasts architecture
and mathematical structure only; the cited works' quantitative results were
not reproduced under a common benchmark, so it is not a head-to-head
performance comparison.

\begin{table*}[!t]
\caption{Architectural and Mathematical Comparison}
\label{tab:comparison}
\renewcommand{\arraystretch}{1.3}
\resizebox{\textwidth}{!}{%
\begin{tabular}{lllll}
\toprule
\textbf{Aspect} & \textbf{Base \cite{cao2026impedance}} & \textbf{Bellicoso et al.\ \cite{bellicoso2016perception}} & \textbf{Kim et al.\ \cite{kim2019highly}} & \textbf{Proposed} \\
\midrule
Primary objective    & Fixed-base pHRI           & Quadruped locomotion  & Dynamic locomotion    & Normalized interaction dynamics for floating-base pHRI \\
Task-space inertia   & $(JM^{-1}J^\top)^{-1}$   & N/A                   & N/A                   & $(J\Mbar J^\top)^{-1}$ \\
Prediction horizon   & $N$-step QP               & $N=1$                 & $\sim$10--30 steps    & $N$-step QP (${\geq}$1\,kHz) \\
Disturbance handling & Kalman, fixed base        & WBC weight tuning     & Centroidal inertia    & Kalman: pHRI + leg momentum \\
Input matrix         & Force-input, inertia dependent & N/A              & N/A                   & Constant acceleration-input $B_d$ \\
Null-space use       & Joint centering           & Posture tracking      & Locomotion            & Predictive interaction QP \\
Steady-state error   & Zero (fixed base)         & Nonzero under load    & Nonzero under load    & Zero (Theorem~\ref{thm:zss}) \\
Contact transitions  & N/A                       & Mode switching        & Gait phases           & Covariance-inflation protocol \\
\bottomrule
\end{tabular}}
\end{table*}

\section{Theory-Validation Experiments}\label{sec:simulation}

The simulations are organized as validation of the theory rather than as a
catalog of scenarios. Scenarios~A and~C test the fixed-contact normalized
predictor and offset-free disturbance rejection. Scenario~B stress-tests the
same predictor under periodic contact shocks; with both feet planted at the
reported gains the disturbance observer is the dominant factor and the
contact-consistent and free-space predictors coincide. Scenarios~E and~F test
whether the same interaction layer remains well behaved when the support model
changes while the force-recovery inertia is updated. The
experiments therefore target the paper's central claims: constant normalized
state dynamics, contact-dependent input matrices, offset-free disturbance
rejection in the normalized coordinates, and stable operation across
contact-mode switches.

\subsection{Simulation Platform}

All experiments were conducted in MuJoCo~3.2~\cite{todorov2012mujoco} at a
2\,kHz integration rate.
Scenarios~A and~B use a biped comprising a 3-DOF right arm, two 4-DOF legs,
and a 6-DOF unactuated floating base (17~DOF total, 11~actuated), with
total mass 46\,kg.
Scenario~C uses the official \textbf{Unitree G1} MJCF model from MuJoCo
Menagerie (29 actuators, 35 generalized velocities, 34.04\,kg), augmented with a
single end-effector site at \texttt{right\_wrist\_yaw\_link}.
The G1 model uses position actuators ($K_p=500$, $\text{dampratio}=1$);
Level~3 applies the position-as-torque approximation
($\Delta q_i=\tau_i/K_p$) to inject predictive interaction forces through the
position channel.

The legs are held in double support by a joint-space PD balance controller;
the centroidal-MPC balance planner of Section~\ref{sec:impedancempc} is not
exercised in these static-/fixed-stance scenarios. The normalized interaction
layer runs at 1\,kHz with $N=20$, $\Delta t_3=1$\,ms in Scenarios~A--C and~E,
and at 2\,kHz with $\Delta t_3=0.5$\,ms in Scenario~F, whose quasi-static
balance stand-in requires the faster inner rate; the physics step is 0.5\,ms
throughout. Note that these simulations recompute $\Naug(q)$, $\Larm(q)$, and
$\mu_{\text{arm}}$ at \emph{every} interaction-layer step rather than holding
them across interleaved cycles, so the 500\,Hz Level-2 rate and the associated
frozen-matrix approximation of Section~\ref{sec:impedancempc} are a statement
about the intended hardware deployment and are not exercised---nor
stress-tested---by these benchmarks.
Friction-cone half-angle $\mu=0.6$.
Level~3 cost weights: $Q=\mathrm{diag}(6\times10^4 I_3,\;60\,I_3)$,
$R=0.01\,I_3$.
Kalman noise: $Q_w=\mathrm{diag}(10^{-4}I_6,\;10^{-2}I_3)$,
$R_v=10^{-6}I_3$.
Covariance-inflation coefficient $\alpha=4$. Scenarios B, C, E, and F are
each a single deterministic MuJoCo rollout per controller; no seed variation
or repeated-trial statistics are collected there, so those reported values
are point estimates of one deterministic nonlinear simulation rather than
samples from a stochastic ensemble. Scenario A additionally reports a
10-seed ensemble (disturbance magnitude jittered $\pm15\%$, onset time
jittered $\pm0.1$\,s) as mean$\pm$std alongside its single nominal-case row.

\subsection{Benchmarked Controllers}

Seven controllers are evaluated across all scenarios (Table~\ref{tab:controllers});
Scenario A additionally reports an eighth, D8, described below.

\begin{table}[!t]
\caption{Benchmarked Controllers}
\label{tab:controllers}
\renewcommand{\arraystretch}{1.2}
\begin{tabular}{lp{6cm}}
\toprule
\textbf{Label} & \textbf{Description} \\
\midrule
D1 & Operational-space PD (task PD, no priority hierarchy), Cartesian stiffness $K_x=800$\,N/m, damping $D_x=40$\,Ns/m \\
D2 & Operational-space PI: adds $K_I=150$\,N/(m$\cdot$s) with anti-windup \\
D3 & Fixed-base impedance MPC~\cite{cao2026impedance} (uses $M^{-1}$ instead of $\Mbar$) \\
D4 & WBC assembler + null-space PD centering (no prediction) \\
D5 & Proposed: WBC + predictive interaction regulation, no Kalman \\
D6 & Proposed: WBC + predictive interaction regulation + Kalman, no inflation ($\alpha=1$) \\
D7 & Proposed full: WBC + predictive interaction regulation + Kalman + covariance inflation ($\alpha=4$) \\
D8 & Scenario A only: same allocation as D4, but the actuator-torque box is
enforced by an explicit per-step OSQP solve (Bellicoso-style cascaded QP)
instead of post-hoc clipping \\
\bottomrule
\end{tabular}
\end{table}

D1 establishes the analytical baseline: for Cartesian stiffness
$K_x=800$\,N/m and $F_h=8$\,N, \eqref{eq:sse} predicts
$e_\infty=F_h/K_x=10.0$\,mm exactly.

\subsection{Scenario A: Fixed Double-Support Step Disturbance}

The robot holds a stationary double-support stance.
At $t=0.5$\,s, a step pHRI force of 8\,N is applied at the end-effector in
the $x$-direction and held for the remaining 4.5\,s.
Results are shown in Table~\ref{tab:scenarioA} and Fig.~\ref{fig:scenarioA}.

\begin{table}[!t]
\caption{Scenario A --- Fixed Stance, 8\,N Step Disturbance}
\label{tab:scenarioA}
\renewcommand{\arraystretch}{1.2}
\resizebox{\columnwidth}{!}{%
\begin{tabular}{@{}lcccc@{}}
\toprule
 & \multicolumn{2}{c}{\textbf{Nominal}} & \multicolumn{2}{c}{\textbf{10-seed jitter, mean$\pm$std}} \\
\textbf{Controller} & \textbf{RMS} & \textbf{SS} & \textbf{RMS [mm]} & \textbf{SS [mm]} \\
\midrule
D1 OS PD          & 9.24  & 10.17 & $8.96\pm0.95$ & $9.84\pm1.08$ \\
D2 OS PI          & 6.82  &  5.86 & $6.60\pm0.71$ & $5.66\pm0.65$ \\
D3 Free-space recovery & 4.60 & 0.317 & $4.54\pm0.22$ & $0.316\pm0.004$ \\
D4 WBC + PD         & 10.02 & 10.83 & $9.70\pm1.07$ & $10.47\pm1.19$ \\
D5 Proposed, no Kalman & 19.02 & 20.19 & $18.72\pm1.02$ & $19.84\pm1.19$ \\
D6 Proposed, no inflation & \textbf{4.54} & \textbf{0.139} & $\mathbf{4.49\pm0.23}$ & $\mathbf{0.135\pm0.037}$ \\
D7 Proposed full    & \textbf{4.54} & \textbf{0.139} & $\mathbf{4.49\pm0.23}$ & $\mathbf{0.135\pm0.037}$ \\
D8 HQP WBC + PD      & 10.02 & 10.83 & $9.70\pm1.07$ & $10.47\pm1.19$ \\
\bottomrule
\end{tabular}}
\end{table}

\begin{figure}[!t]
  \centering
  \includegraphics[width=\columnwidth]{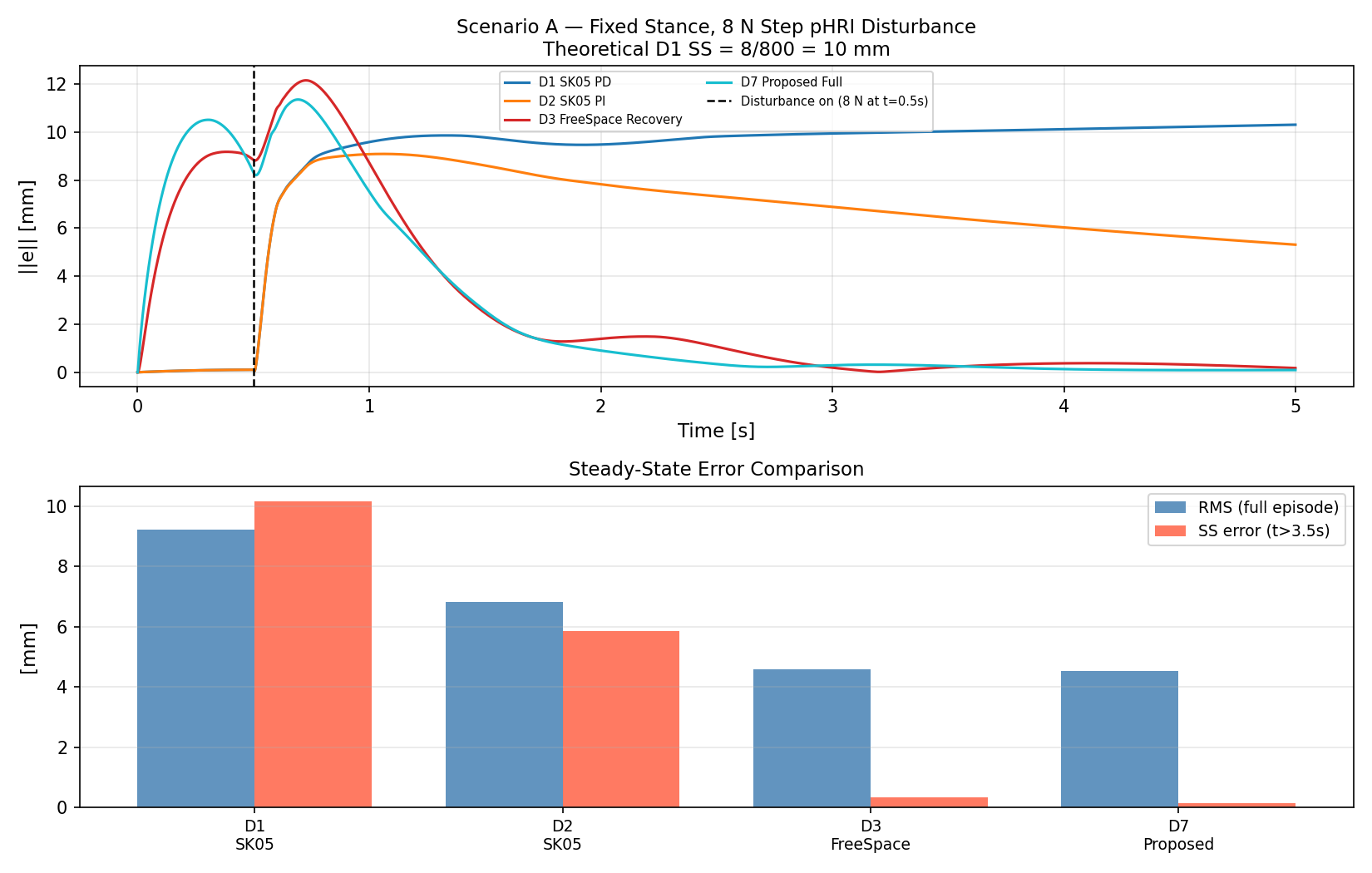}
  \caption{Scenario A --- Fixed double-support stance, 8\,N step pHRI at
           $t=0.5$\,s.
           \textit{Top:} Cartesian end-effector error norm $\|e\|$ over
           time.
           The Kalman-augmented predictive controller removes most of the
           steady-state offset, while the no-Kalman predictive controller is
           biased by the sustained pHRI load.
           \textit{Bottom:} RMS and steady-state~(SS) error bar chart.
           D4--D6 and D8 are omitted for readability and reported in
           Table~\ref{tab:scenarioA}.}
  \label{fig:scenarioA}
\end{figure}

Key findings: (1)~The reactive baselines (D1, D4) leave the theoretical
$10$\,mm impedance offset ($8\,\text{N}/800\,\text{N/m}$). (2)~The disturbance
observer and input-centered QP are essential under sustained force: D6/D7
reduce steady-state error to $0.139$\,mm, a $73\times$ reduction relative to
D1's $10.17$\,mm, whereas D5 remains at $20.19$\,mm.
D3 reaches $0.317$\,mm, so this fixed stance does not isolate a substantial
contact-consistency benefit. Inflation is inactive.
(3)~The 10-seed disturbance-jitter ensemble reproduces the same ranking with
the same ratio: D7's mean SS error is $0.135$\,mm against D1's $9.84$\,mm, a
$73\times$ reduction, so the single nominal-case result above is not an
artifact of one particular disturbance draw.
(4)~D8 replaces D4's post-hoc torque clip with an explicit per-step OSQP box
solve---a direct instantiation of Bellicoso et al.'s cascaded-QP hierarchical
WBC~\cite{bellicoso2016perception,bellicoso2019alma} rather than the
analytic-projection proxy used elsewhere in this paper. D8 matches D4 to
within $3\times10^{-10}$\,mm on every seed: in this untested-actuator-limit
regime the two enforcement mechanisms are numerically indistinguishable, so
D4 is confirmed---not merely assumed---to already report what a genuine
cascaded-QP baseline would.

\subsection{Scenario B: Stance with Periodic Transmitted Force Shocks}

The robot holds double-support stance while a sustained 8\,N pHRI force
and periodic 6\,N spikes (1\,Hz, 0.1\,s duration) are applied.
Results are shown in Table~\ref{tab:scenarioB} and Fig.~\ref{fig:scenarioB}.

\begin{table}[!t]
\caption{Scenario B --- Stance + 1\,Hz Shocks, Sustained 8\,N pHRI}
\label{tab:scenarioB}
\renewcommand{\arraystretch}{1.2}
\begin{tabular}{lcc}
\toprule
\textbf{Controller} & \textbf{RMS err [mm]} & \textbf{Peak at switch [mm]} \\
\midrule
D1 OS PD          & 10.94 & 15.77 \\
D2 OS PI          &  5.91 & 10.17 \\
D3 Free-space recovery & 4.49 & 4.64 \\
D4 WBC + PD         & 11.92 & 17.30 \\
D5 Proposed, no Kalman & 21.19 & 24.96 \\
D6 Proposed, no inflation & \textbf{4.37} & \textbf{4.65} \\
D7 Proposed full    & \textbf{4.37} & \textbf{4.65} \\
\bottomrule
\end{tabular}
\end{table}

Under sustained load plus periodic shocks, the disturbance observer is again
the dominant factor: the no-Kalman predictive controller (D5, $21.19$\,mm RMS)
is biased by the sustained load, while the Kalman-augmented controllers track
to ${\approx}4.4$\,mm. Because both feet stay planted throughout---the shocks
are applied as force spikes, not an actual contact-set change---the free-space
(D3, $4.49$\,mm) and contact-consistent (D7, $4.37$\,mm) recoveries are
nearly indistinguishable here; the value of contact-consistency requires the support
model itself to switch (Scenarios~E and~F). Covariance inflation is inert
(D6$=$D7) because the contact mode never changes.

\begin{figure}[!t]
  \centering
  \includegraphics[width=\columnwidth]{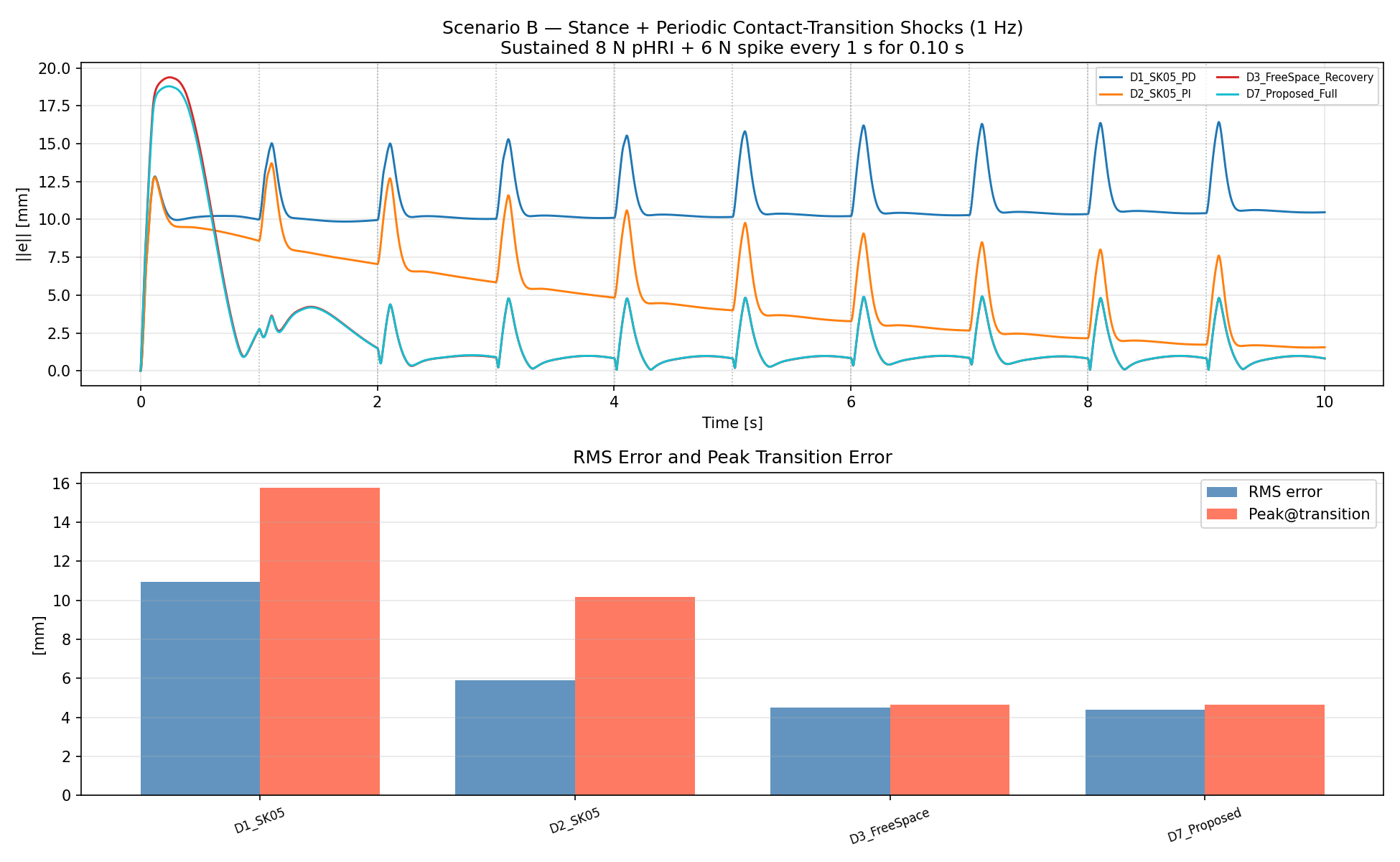}
  \caption{Scenario B --- double-support stance, sustained 8\,N pHRI
           + 6\,N periodic shocks at 1\,Hz.
           The disturbance observer removes the steady-state bias (D5 is
           offset, D6/D7 track to a few mm); with both feet planted, the
           free-space (D3) and contact-consistent (D7) predictors coincide
           (Table~\ref{tab:scenarioB}).}
  \label{fig:scenarioB}
\end{figure}

\subsection{Scenario C: Unitree G1 Real Model, Fixed Stance}

Scenario~A is repeated on a Menagerie-derived Unitree G1 MJCF (29 actuators,
35 generalized velocities, 34.04\,kg). The benchmark overrides the XML's
2\,ms timestep with 0.5\,ms.
Results are shown in Table~\ref{tab:scenarioC} and Fig.~\ref{fig:scenarioC}.

\begin{table}[!t]
\caption{Scenario C --- Unitree G1 Model (34.04\,kg), Fixed Stance, 8\,N Step}
\label{tab:scenarioC}
\renewcommand{\arraystretch}{1.2}
\begin{tabular}{lcc}
\toprule
\textbf{Controller} & \textbf{RMS err [mm]} & \textbf{SS err [mm]} \\
\midrule
D1 OS PD          & 9.06  & 9.50  \\
D2 OS PI          & 6.50  & 4.98  \\
D3 Free-space recovery & 8.06 & 1.493 \\
D4 WBC + PD         & 9.06  & 9.50  \\
D5 Proposed, no Kalman & 16.77 & 20.50 \\
D6 Proposed, no inflation & \textbf{6.85} & \textbf{0.884} \\
D7 Proposed full    & \textbf{6.85} & \textbf{0.884} \\
\bottomrule
\end{tabular}
\end{table}

\begin{figure}[!t]
  \centering
  \includegraphics[width=\columnwidth]{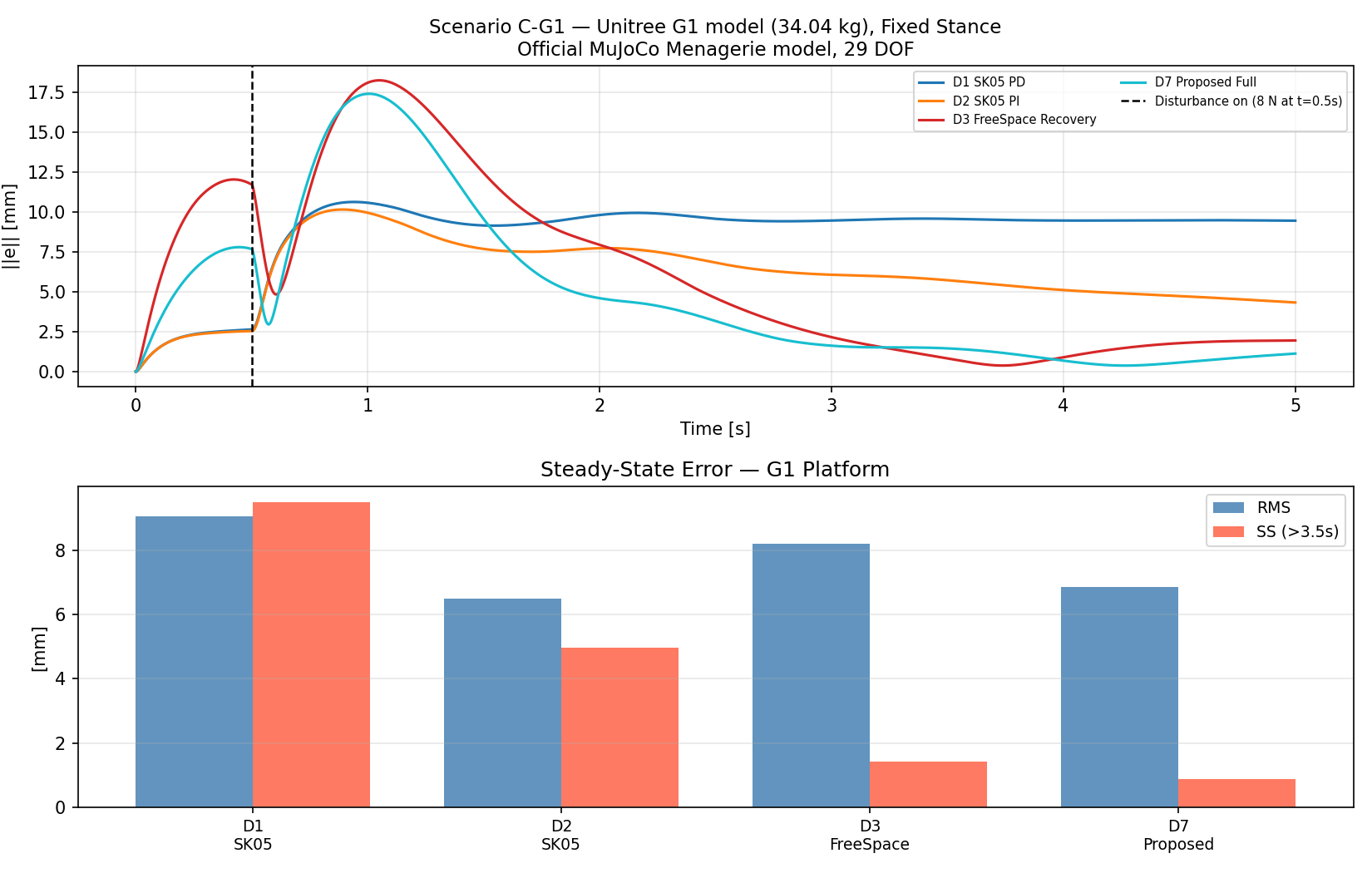}
  \caption{Scenario C --- Menagerie-derived Unitree G1 model (34.04\,kg),
           fixed stance, 8\,N step pHRI.
           In static G1 stance the contact-consistent (D7) and fixed-base
           (D3) predictive controllers are indistinguishable; the
           Kalman-augmented controllers reduce the steady-state offset.
           D4--D6 are omitted for readability and reported in
           Table~\ref{tab:scenarioC}.}
  \label{fig:scenarioC}
\end{figure}

As in Scenario~A, the static G1 stance does not distinguish contact-consistent
from free-space recovery: D3 (1.493\,mm SS) and D7 (0.884\,mm) remain the same
order, a $10.7\times$ reduction relative to D1's $9.50$\,mm. D5 remains at
20.50\,mm, confirming that disturbance estimation drives the improvement.
Hardware torque-mode performance is not inferred from this position-actuator
simulation.

\subsection{Theorem and Inertia Diagnostics}

For the reported $(Q,R,\Delta t)$, the relative first-step gain error
$\|K_N-K_\infty\|/\|K_\infty\|$ is 0.671 at $N=20$, 0.0246 at $N=80$, and
$9.13\times10^{-5}$ at $N=160$. This verifies asymptotic convergence while
showing that the deployed controller is not numerically equal to the
infinite-horizon impedance gain. At the nominal toy-biped posture, the
free-space, double-support, and right-support task-inertia diagonals are
$[1.138,1.092,2.544]$, $[1.140,1.094,2.557]$, and
$[1.139,1.093,2.557]$\,kg. Their small separation explains why fixed-stance
D3 and D7 remain of the same order.

\subsection{Scenario E: Bracing-Hand Support Transition}

To exercise a \emph{genuine} contact-mode transition without single-support
balance, the biped is given a left arm that periodically braces against and
releases a fixed rail while both feet remain planted.
The active contact set alternates
$\{\text{L foot},\text{R foot}\}\leftrightarrow\{\text{L foot},\text{R foot},\text{L hand}\}$
($J_c$: $6\leftrightarrow9$ rows), so $\Mbar$ and $\Larm$ switch while the
normalized $(A_d,B_d)$ remain constant.
A sustained 8\,N pHRI force acts on the right (task) arm throughout.

\begin{table}[!t]
\caption{Scenario E --- Bracing-Hand Support Transition, Sustained 8\,N pHRI}
\label{tab:scenarioE}
\renewcommand{\arraystretch}{1.2}
\footnotesize
\begin{tabular}{@{}lcc@{}}
\toprule
\textbf{Controller} & \textbf{RMS err [mm]} & \textbf{Peak at switch [mm]} \\
\midrule
D5 no Kalman              & 20.11 & 20.38 \\
D6 Kalman, no infl. ($\alpha=1$) & 4.54 & 7.81 \\
D7 Kalman + infl. ($\alpha=4$)   & \textbf{4.34} & \textbf{7.65} \\
\bottomrule
\end{tabular}
\end{table}

This scenario validates disturbance estimation across a
genuine contact-set switch. The no-Kalman controller is biased by the sustained
pHRI load ($20.11$\,mm RMS), while the Kalman-augmented controllers track the
switch to ${\sim}4.4$\,mm. Here covariance inflation gives a small further
improvement (D7 $4.34$\,mm / $7.65$\,mm peak vs D6 $4.54$ / $7.81$),
consistent with adaptation to a changed normalized disturbance rather than a
changed predictor. Scenario~F extends the contact-model
test to a scheduled \emph{single$\leftrightarrow$double} support-mode
transition for the interaction layer.

\subsection{\texorpdfstring{Scenario F: Quasi-Static Single$\leftrightarrow$Double Support Transition}{Scenario F: Quasi-Static Single-Double Support Transition}}
\label{sec:scenarioF}
Scenarios~B and~E hold both feet planted; this scenario exercises a scheduled
change of the \emph{foot-support model} used by the interaction layer, in
which the biped shifts its weight over the right foot, lifts the left foot,
switches the contact-dependent recovery from double to right-foot support,
holds, and places the foot back:
\begin{equation*}
\{\text{L foot},\text{R foot}\}\;\longleftrightarrow\;\{\text{R foot}\},
\end{equation*}
so the contact Jacobian used by the interaction-layer model ($J_c$:
$6\leftrightarrow3$ rows), $\Mbar$, and $\Larm$ switch; $(A_d,B_d)$ do not.
The interaction layer runs above the balance controller with a sustained
$8$\,N pHRI force on the arm; the arm regulates a target fixed relative to the
torso, isolating its disturbance-rejection task from the balancing motion.

\emph{Balance stand-in and its limitations.} Realizing a standing single-support
phase requires \emph{ankle-roll} (lateral centre-of-pressure) authority, which
the biped of Scenarios~A/B/E lacks; hip-roll balancing alone produces a
foot-tipping moment that cannot be countered and the robot falls. We therefore
use a modified model (\texttt{biped\_qstatic}: a wider $18$\,cm foot and added
ankle-roll actuators) and a hand-tuned quasi-static controller---a stiff
hip-roll CoM regulator for the coarse weight transfer plus a
centre-of-pressure--limited ankle-roll torque for the fine support-mode
stabilization. This is a deliberate minimal stand-in for the Level-1 balance
layer, \emph{not} the centroidal MPC of Section~\ref{sec:impedancempc}.
MuJoCo contact auditing detects the lifted foot on the floor during 90.4\% of
the nominal single-support interval, so Scenario~F is only a
support-mode/contact-model switch test for the interaction layer, not as a full
dynamic walking or physically clean single-support result. Classical baselines
D1--D4 are omitted from this scenario because they carry no observer or
predictive mechanism to condition on the scheduled switch and their
fixed-stance behavior is already characterized in Scenarios~A--C;
Table~\ref{tab:scenarioF} therefore isolates the effect of Kalman conditioning
and covariance inflation across the transition.

\begin{table}[!t]
\caption{Scenario F --- Quasi-Static Single$\leftrightarrow$Double Support
         Transition, Sustained 8\,N pHRI (torso-relative arm error)}
\label{tab:scenarioF}
\renewcommand{\arraystretch}{1.2}
\centering
\begin{tabular}{@{}lcc@{}}
\toprule
\textbf{Controller} & \textbf{RMS err [mm]} & \textbf{Peak at switch [mm]} \\
\midrule
D5 no Kalman               & 24.11 & 25.72 \\
D6 Kalman, no infl. ($\alpha=1$) & 12.48 & 16.68 \\
D7 Kalman + infl. ($\alpha=4$)   & \textbf{12.42} & \textbf{16.68} \\
\bottomrule
\end{tabular}
\end{table}

The biped stays upright (minimum torso height 0.861\,m), but the contact audit
precludes a physical single-support claim. The recovered inertia changes by
less than two percent: $[1.11,1.04,2.30]$\,kg in the double-support model
versus $[1.09,1.04,2.31]$\,kg in the right-foot model.

The observer reduces RMS error from 24.11 to 12.42\,mm. Inflation is neutral
within numerical variation: RMS changes from 12.48 to 12.42\,mm while switch
peak is 16.68\,mm for both controllers.

\begin{figure}[!t]
  \centering
  \includegraphics[width=\columnwidth]{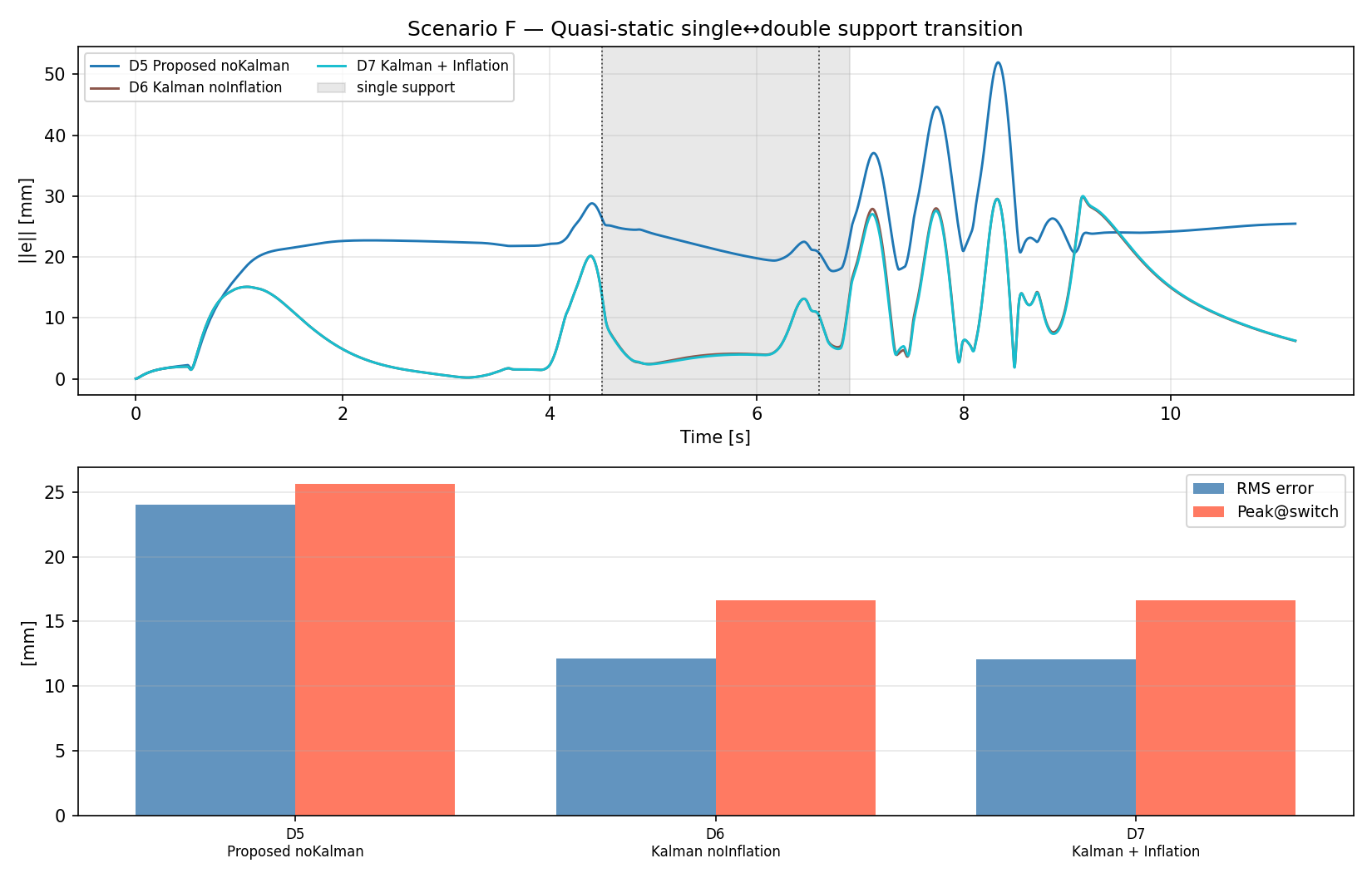}
  \caption{Scenario F --- torso-relative arm error across a quasi-static
           scheduled single$\leftrightarrow$double support-mode transition
           (shaded: right-foot-support model). All controllers stay stable
           across the contact-model switch; the Kalman gives a modest offset
           improvement and covariance inflation is neutral
           (Table~\ref{tab:scenarioF}).}
  \label{fig:scenarioF}
\end{figure}

\section{Conclusion}\label{sec:conclusion}

This paper proposed contact-consistent interaction dynamics normalization for
floating-base pHRI.
The central result is that, after balance and contact tasks are removed
through the whole-body hierarchy, the residual end-effector interaction
channel reduces to a linear double integrator with constant discrete state and
input matrices.
Whole-body control, null-space projection, disturbance observation, and
contact-mode scheduling are the realization mechanisms; the underlying object
being regulated is the normalized interaction dynamics.

This viewpoint explains the empirical results.
The disturbance observer is the primary driver of fixed-stance accuracy. The
toy model's free-space and contact-consistent inertias differ by about one
percent near the tested posture, so the experiments do not establish a large
performance gain from contact consistency alone. In the two scenarios with a
full baseline sweep (Scenarios~A and~C), the Kalman-augmented controller
nonetheless reduces steady-state error by $73\times$ and $10.7\times$
relative to the reactive operational-space PD baseline, confirming that
disturbance-observer conditioning, not contact consistency alone, drives the
improvement.
The Impedance Equivalence Theorem further shows that classical
operational-space impedance is recovered as the infinite-horizon limit of the
same predictive interaction law, so impedance becomes a design interpretation
rather than the starting point of the method.
The simulations also demonstrate operation across scheduled
support-mode/contact-model changes, although a fully dynamic single-support
walking transition with a complete centroidal-MPC balance layer remains future
work.
More broadly, the normalization perspective suggests a common foundation for
pHRI, loco-manipulation, dexterous manipulation, surgical robotics, and other
contact-rich systems: first represent the task as interaction dynamics, then
normalize those dynamics into a configuration-invariant predictive control
problem.

The proposed framework occupies a structural niche not addressed by prior
locomotion-centric frameworks~\cite{dicarlo2018dynamic,kim2019highly,bellicoso2016perception,koolen2016design}:
it deliberately halts the WBC stack after balance constraints are satisfied
and injects normalized predictive interaction regulation into the residual
null space.

Future work includes hardware validation on a Unitree R1 or G1~EDU platform
(the low-level SDK torque interface is directly compatible with the proposed
architecture), extension to variable contact modes during dynamic walking,
and integration of force-torque sensor feedback for improved Kalman
convergence at contact transitions.

\bibliographystyle{IEEEtran}
\bibliography{references}